\newtheorem{theorem}{Theorem}
\newtheorem{corollary}[theorem]{Corollary}
\begin{document}



\title{\LARGE \bf 
Collecting a Swarm in a Grid Environment Using Shared, Global Inputs
}
\author{Arun V. Mahadev,
Dominik Krupke,
Jan-Marc Reinhardt,
S\'andor P. Fekete,
  Aaron T.\ Becker
\thanks{{A.~Mahadev and A.~Becker are with the Department of Electrical and Computer Engineering,  University of Houston, Houston, TX 77204-4005 USA 
      \protect\url{ aviswanathanmahadev@uh.edu,atbecker@uh.edu }
S.~Fekete, D.~Krupke, and
J.M.~Reinhardt are with the Dept.~of Computer Science, TU Braunschweig,  M\"uhlenpfordtstr.~23, 38106 Braunschweig, Germany,
      \protect\url{s.fekete@tu-bs.de,j-m.reinhardt@tu-bs.de,d.krupke@tu-bs.de  }
}
} 
} 
\maketitle

\begin{abstract}
This paper investigates efficient techniques to collect and concentrate an under-actuated particle swarm despite obstacles. 
Concentrating a swarm of particles is of critical importance in health-care for targeted drug delivery, where micro-scale particles must be steered to a goal location.
Individual particles must be small in order to navigate through micro-vasculature, but decreasing size brings new challenges. 
Individual particles are too small to contain on-board power or computation and are instead controlled by a global input, such as an applied fluidic flow or electric field. 

To make progress, this paper considers a swarm of robots initialized in a grid world in which each position is either free-space or obstacle.
This paper provides algorithms that collect all the robots to one position and compares these algorithms on the basis of efficiency and implementation time.
\end{abstract}

  \section{Introduction}
  Targeted drug therapy is a goal for many interventions, including treating cancers, delivering pain-killers, and stopping internal bleeding. Treatment often uses the patient's vasculature to deliver the therapy. This drug therapy is challenging due to the complicated geometry of vasculature, as shown in Fig.~\ref{fig:vascularNetwork}.

  This paper builds on the techniques for controlling many simple robots  with uniform control inputs presented in \cite{Becker2014,Becker2014a}, and also outlines new research problems; see video and abstract~\cite{bmd+-pcdfbm-15} for a visualizing overview.
  
    \begin{figure}
\centering
\begin{overpic}[width=0.9\columnwidth]{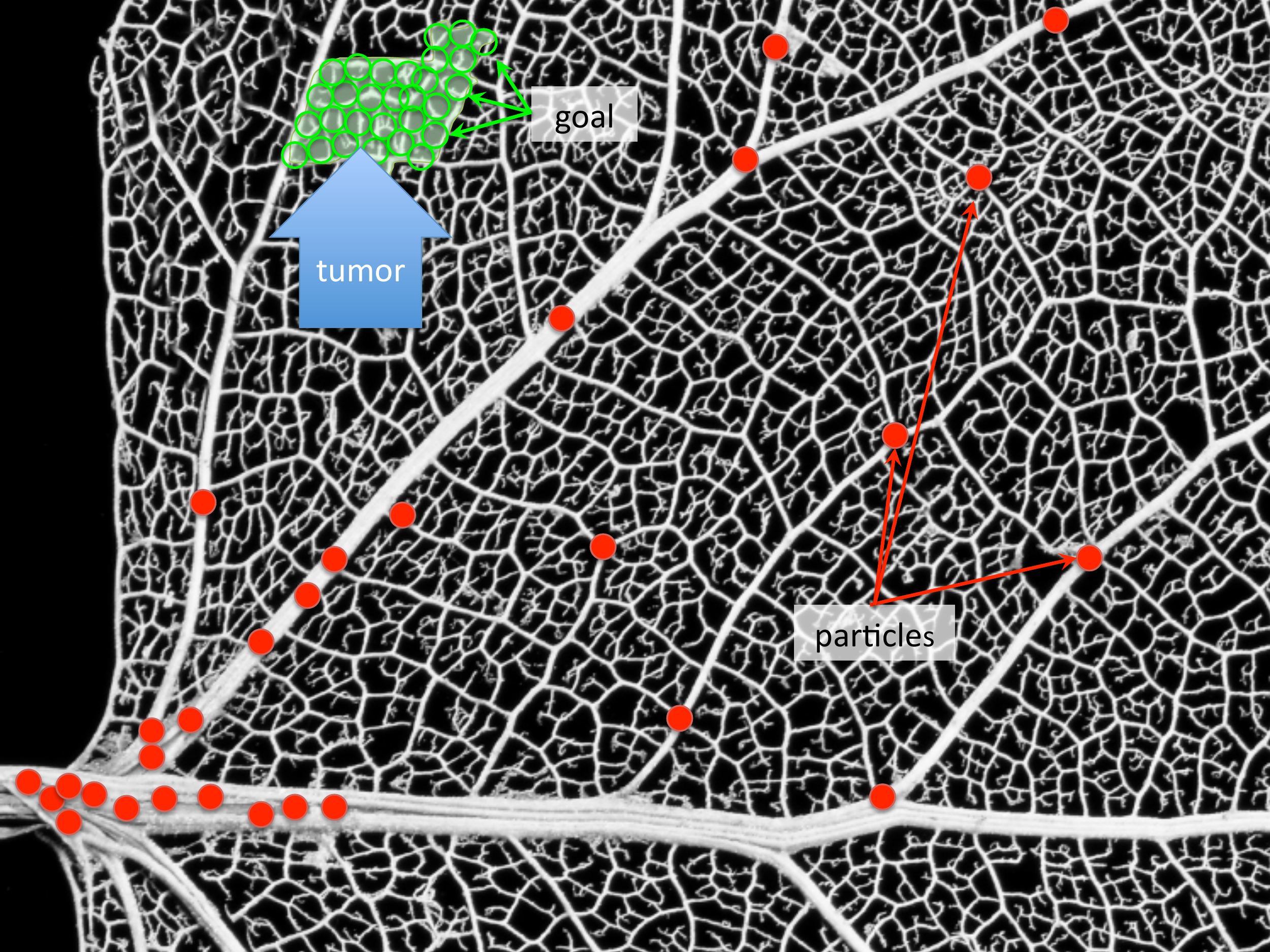}\end{overpic}
\caption{\label{fig:vascularNetwork}
Vascular networks are common in biology such as the circulatory system and cerebrospinal spaces, as well as in porous media including sponges and pumice stone.  Navigating a swarm using global inputs, where each member receives the same control inputs, is challenging
due to the many obstacles. 
This paper focuses on using boundary walls to break the symmetry and collect the swarm at a desired location. See simulation at \cite{youtube}.} 
\end{figure}
\cite{Chowdhury2015} gives us an understanding of some of the  challenges related to controlling multiple micro robots (less than 64 robots at a time).  
Building systems capable of accomplishing difficult motion tasks is a major focus of research in this area and \cite{cheang2014multiple} shows how magnetic manipulation has great potential controlling such particles in low Reynolds number. One example is particles with a magnetic core and a catalytic surface for carrying medicinal payloads~\cite{pouponneau2009magnetic,litvinov2012high}.
  An alternative is \emph{superparamagnetic iron oxide microparticles},  9 $\mu$m particles that are used as a contrast agent in MRI studies~\cite{mellal2015magnetic}. Real-time MRI scanning could allow feedback control using the location of a swarm of these particles.


  Steering magnetic particles using the magnetic gradient coils in an MRI scanner was implemented in \cite{mathieu2007magnetic, pouponneau2009magnetic}.  
 3D Maxwell-Helmholtz coils are often used for precise magnetic field control \cite{mellal2015magnetic}. Still needed are motion planning algorithms to guide the swarms of robots through vascular networks.

 From an algorithmic perspective, the strongest parallels in literature are robot localization and rendezvous. In
 \emph{Almost
sensorless localization} or ``localizing a blind robot in a known map",
a mobile robot with a map of the workspace must localize itself,
using only a compass and a bump-sensor that detects when the robot
contacts a wall.
~\cite{o2005almostPhD,o2005almost} has been extended to robots with bounded uncertainty in their inputs~\cite{Lewis01092013}. Given an environment, finding a localizing sequence is framed as a planning problem with an unknown initial state and an unobservable current state.  
  The solution in \cite{o2005almostPhD} was to transform the problem from an unobservable planning problem in state space to an observable problem in a more complex  information space.  They provided a complete
algorithm, but  generating an optimal localizing sequences remains an open problem.
~\cite{o2005almostPhD} assumes there is only one robot, but it still gives us clarity on how planning for independent robot systems differ from swarm robot systems. 
Also related is work on sensorless part orientation, where a flat tray is tilted in a series of directions to bring a polygonal part, initially placed at random orientation and position in the tray, to a known position and orientation~\cite{Akella2000a}.
This  is similar to localizing a robot with minimum travel; however, it moves the robot and requires take additional measurements\cite{dudek1998localizing}.  

 The other parallel concept, \emph{robot rendezvous}, requires two
or more independent, intelligent agents to meet. 
Alpern and Gal~\cite{alpernbook}
introduced a wide range of models and methods for this concept as have Anderson and Fekete~\cite{anderson2001two} in a two-dimensional geometric setting. Key assumptions include a bounded topological environment and robots with limited onboard computation.
This is relevant to maneuvering particles through worlds with obstacles and implementation of strategies to reduce computational burden while calculating distances in complex worlds \cite{meghjani2012multi}. 
In a setting with autonomous robots, these can move independent of each other, i.e., follow
different movement protocols, 
called {\em asymmetric} rendezvous in the mathematical literature~\cite{alpernbook}.
If the agents are required to follow the same protocol, this is called {\em symmetric} rendezvous.
This corresponds to our model in which particles are bound by the
uniform motion constraint; symmetry is broken only by interaction with the obstacles.

The `robots' in this paper are simple particles without autonomy.
A planar  grid \emph{workspace} $W$ is filled with a number of unit-square robots (each occupying one cell of the grid)  and some fixed unit-square blocks.  Each unit square in the workspace is either  \emph{free}, which a robot may occupy or \emph{obstacle} which a robot may not occupy.  Each square in the grid can be referenced by its Cartesian coordinates $\bm{x}=(x,y)$.
All robots are commanded in unison: the valid commands are  ``Go Up" ($u$), ``Go Right" ($r$), ``Go Down" ($d$), or ``Go Left" ($l$).

We consider two classes of commands, discrete and maximal moves.
\emph{Discrete moves}: robots all move in the commanded direction one unit unless they are prevented from moving by an obstacle or a stationary robot. \emph{Maximal moves}: robots all move in the commanded direction until they hit an obstacle or a stationary robot. For maximal moves, we assume the area of $W$ is finite and issue each command long enough for the robots to reach their maximum extent.
A command sequence $\bm{m}$ consists of an ordered sequence of moves $m_k$, where each $m_k\in\{u,d,r,l\}$  
A representative command sequence is $\langle u,r,d,l,d,r,u,\ldots\rangle$.

  We consider two types of particles, small and large, as depicted
in Fig. \ref{fig:SmallVsLarge}. If particles are much smaller than the
workspace geometry, we call them small. We represent each grid cell as
filled if it contains at least one particle and empty otherwise. A cell filled
with small particles can combine with another filled cell.  If  particles are
the same size as workspace gridcells, we call the particles large.
Large particles cannot combine.  The presence of a  large particle in a cell
prevents another particle from entering.
  
  We study two notions of collecting a swarm, corresponding with particle size:
for small particles the swarm is collected when all robots share the same
$(x,y)$ coordinates. If the particles are large, the swarm is collected when
it forms one connected component.  
2D cells are neighbors if they share an edge, 3D cells are neighbors if they share a face.  
A \emph{connected component} is a set of particles $P$ such that for any two particles in $P$, there is a sequence of neighboring particles that connect them.

\begin{figure}
\centering
\begin{overpic}[width=1.0\columnwidth]{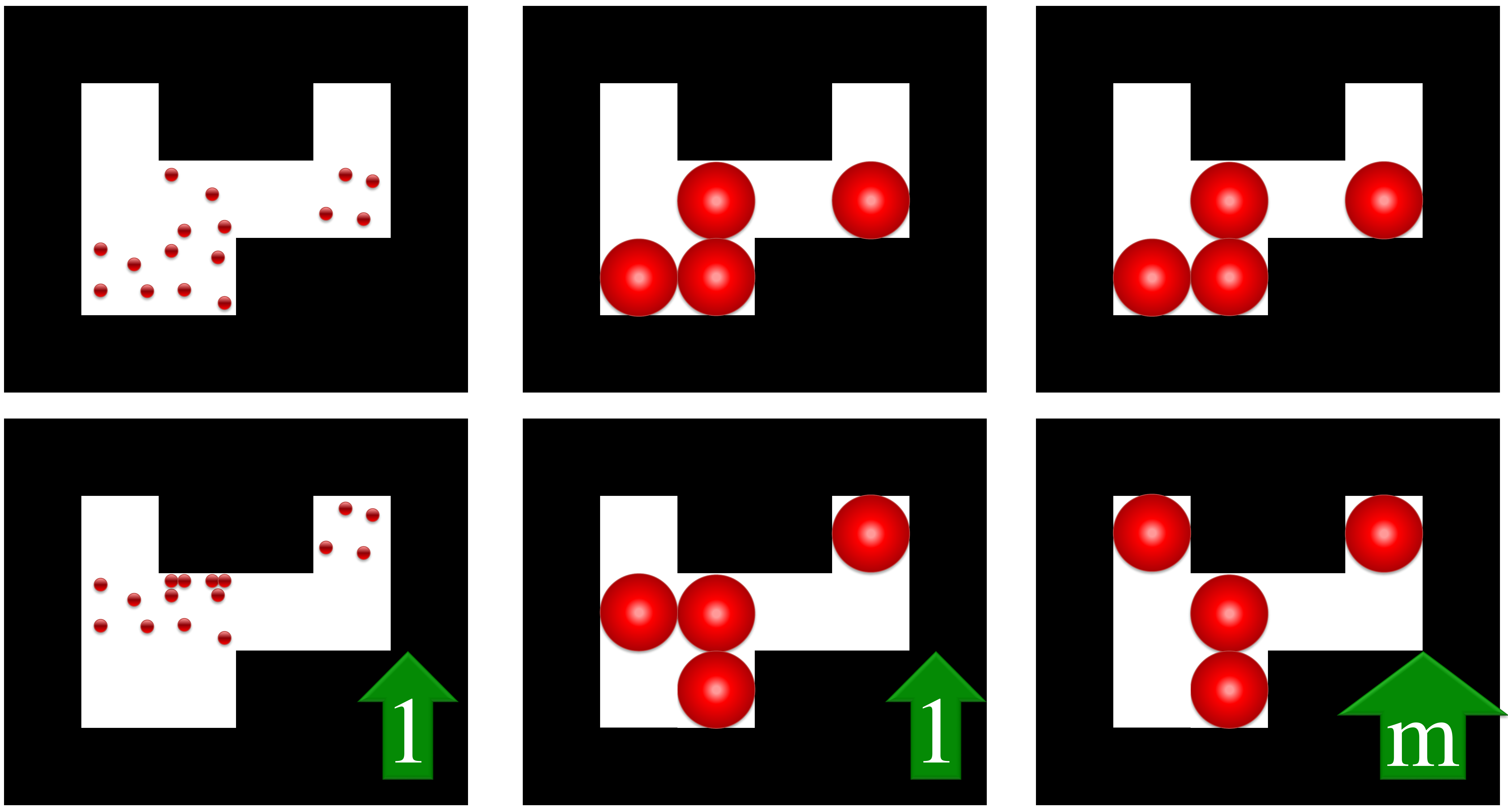}\end{overpic}
\caption{\label{fig:SmallVsLarge}
If particles are much smaller than the workspace geometry, we call them {\em small}.
We represent each grid cell as filled or empty, and allow a filled cell
to combine with another filled cell.  If  particles are the same size as
workspace gridcells, we call the particles {\em large}.  Large particles cannot
combine.  The presence of a  large particle in a cell prevents another particle
from entering.} 
\end{figure}

  \section{Theory}\label{sec:theory}

 With discrete inputs and small particles, the problem can be reduced to
localizing a sensorless robot in a known workspace. This is similar to work
on draining a polygon~\cite{aloupis2014draining}, or localizing a blind
robot~\cite{o2005almostPhD,o2005almost}, but with discrete inputs. In Section~\ref{sec:Results}, for the
small particle problem we present an optimal collection algorithm, Alg.~1 in
Section~\ref{Alg1} and a greedy collection policy Alg.~2 in
Section~\ref{Alg2}. We also give positive (Section~\ref{Alg3}) and negative
(Section~\ref{Alg4}) results for large particles.
\subsection{Our problems of interest}
\label{sec:problem of interest}
  
  The freespace must be connected. Robots initialized in two unconnected components $i$ and $j$ of a free space cannot be collected. The proof is trivial, since a robot in free space $i$ can not reach free space $j$. Such a configuration is depicted in Fig.~\ref{fig:impossible configurations}$a$.
  
  Under maximal inputs, the world can be constructed with spaces resembling bottles or fish weirs from which a single robot cannot escape, as shown in Fig.~\ref{fig:impossible configurations}$b$.
  If the free space contains at least two such bottles with at least one robot in each, the swarm cannot be collected with maximal inputs.
  
The world must be bounded. Two initially separated robots in an unbounded world without obstacles cannot be collected; however with discrete inputs, one obstacle is sufficient as seen in Fig.~\ref{fig:impossible configurations}$c$ and can be inferred from~\cite{Becker2013b}.
   \begin{figure}
  \begin{overpic}[width=\columnwidth]{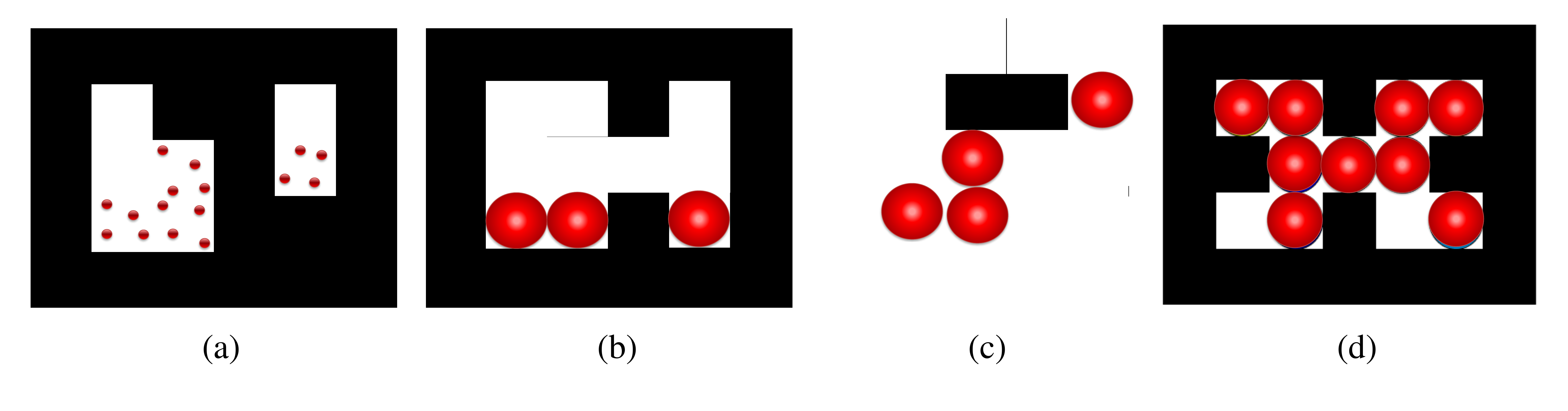}\end{overpic}
  \caption{\label{fig:impossible configurations}  Examples of workspaces for which collection is not possible.
 (a) The swarms are in unconnected components  
 (b) A world for which $maximal$ moves will never allow particles to meet. 
 (c) An unbounded world with a single obstacle.  In this world $discrete$ moves can collect particles but $maximal$ moves cannot collect particles.
 In a world without boundaries and without obstacles, discrete moves cannot collect all particles.
 (d) A world configuration where $large$ particle collection is impossible. No input sequence exists that will make all the particles part of the same connected component. 
 }
  \end{figure}

\emph A swarm with discrete moves and small particles can be collected on any bounded grid.
However, with large particles there are configurations where the topology does not allow collection, as seen in Fig. \ref{fig:impossible configurations}$d$.      

  \subsection{Collecting with the shortest move sequence}
  \label{Alg1}
  A conceptually simple strategy to collect all particles in a workspace is to
construct a configuration tree that expands the tree of all possible movement
sequences in a breadth-first search manner, and halts when the configuration
has all robots collected at one point. Alg.~\ref{alg:OptimalCollectWithOverlap} implements this breadth-first-search
technique. 
  It initializes a tree where each node contains the configuration of robot locations $C[p]$ , the move that generated this configuration $M[p]$ and a parent configuration pointer $P[p]$. Here $C,M,P$ are the respective complete lists.  $p$ is the current iteration pointer and $e$ is the end of list pointer. 
  The root node is $\{ C_0, \varnothing, 0 \}$, where $C_0$ is the initial configuration of robot locations.
We then construct a breadth-first tree of possible configurations $\{ u,r,d,l\}$, pruning configurations that already exist in the tree. We stop when the cardinality at a leaf is one, $|C_i| = 1$, which indicates that the swarm has been collected (equivalently, that the robot has been localized).
This algorithm produces the optimal path to determine the shortest path length `s' as seen in Fig.~\ref{fig:CollectionSolution1}, but requires $O(4^{\text{s}})$ time to learn and $O(4^{\text{s}})$ memory and the graph grows exponentially (Fig.~\ref{fig:ConvergenceTimesForOptimal}). This leads us to investigate other algorithms which will solve the path with much lower computational time and data.

 \begin{figure*}[htb!]
  \begin{overpic}[width=\linewidth]{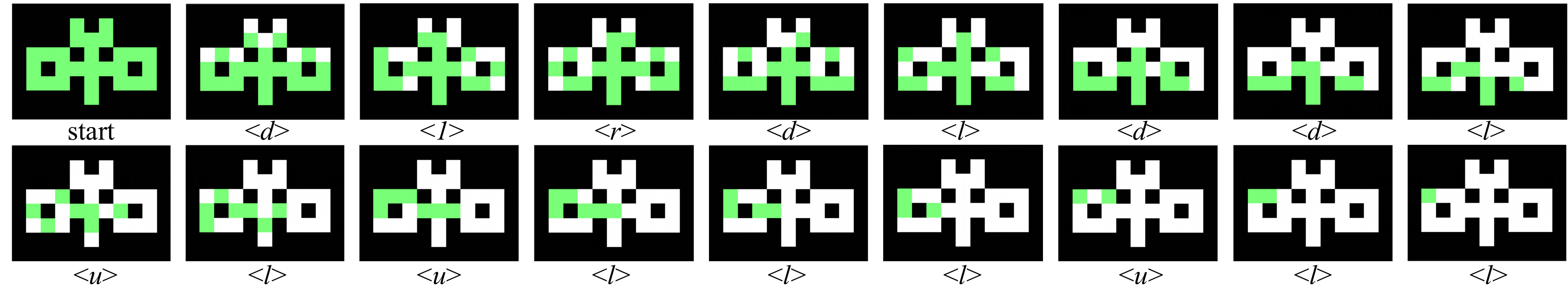}\end{overpic}
  \caption{\label{fig:CollectionSolution1}  With discrete inputs and particles, the collecting problem can be reduced to localizing a sensor-less robot in a known workspace. Above shows the optimal solution for a world with 27 free spaces, which required expanding 423,440 nodes with an optimal path (shown) taking 17 moves. 
 }
  \end{figure*}

  \begin{figure}
  \begin{overpic}[width=\columnwidth]{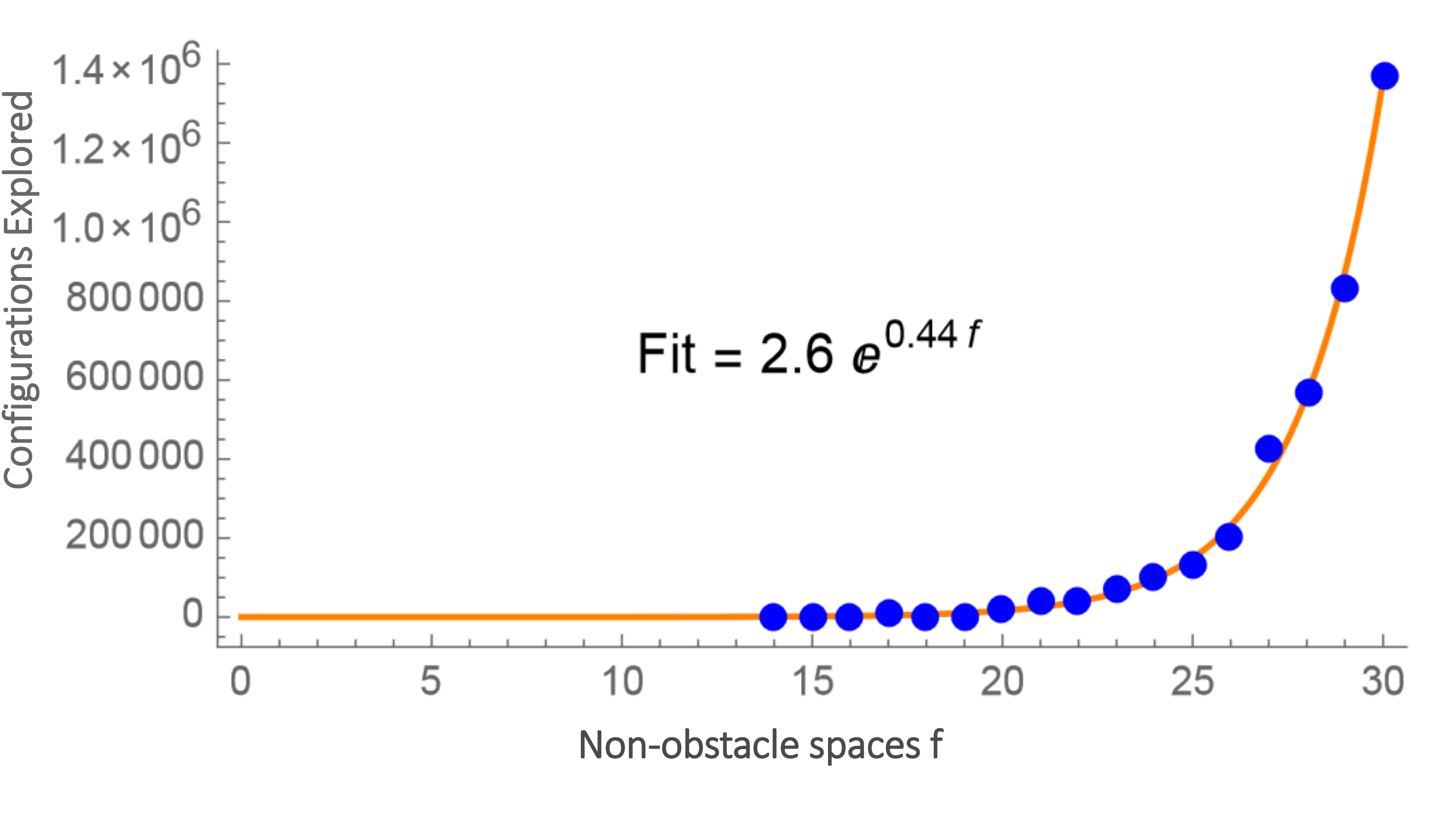}\end{overpic}
  \caption{\label{fig:ConvergenceTimesForOptimal} Unfortunately, the optimal BFS (Algorithm \ref{alg:OptimalCollectWithOverlap}) solution requires expanding a number of nodes that increases (approximately) exponentially with the number of free spaces.  A workspace with 30 free spaces required 1.6 million nodes before finding the optimal solution.
 }
  \end{figure}

  \begin{algorithm}
\caption{OptimalCollecting($W$, $C_0$)}\label{alg:OptimalCollectWithOverlap}
\begin{algorithmic}[1]
\State $p \gets  1$ 
\State $\{C[p],M[p],P[p]\} \gets \{ C_0,   \varnothing,  0\}$\Comment{initialize}
\State $e \gets  1$ 
\While{ $|C[p]|>1  $} \Comment{more than 1 unique position}
\For{$m = \{u,d,r,l\}$} 
\State  $C_{temp} \gets  $  ApplyMove($C[p], m$)
\If{$C_{temp} \not\in C $} \Comment{add node to list}
 \State  $e \gets  e+1$ 
\State $\{C[e],M[e],P[e]\} \gets \{  C_{temp},   m, p\}$
\EndIf
 \EndFor
  \State  $p \gets  p+1$ \Comment{get next configuration}
\EndWhile
\State $path \gets \{\}$ \Comment{construct optimal path}
\While{ $P[p]>1 $} 
\State Append $M[p]$ to $path$
\State $p \gets P[p]$
\EndWhile
\State $path \gets$ Reverse[$path$]
\end{algorithmic}
\end{algorithm}

  \subsection{Collecting small particles with a greedy strategy}
\label{Alg2}

Two particles in a finite and connected polyomino can be collected with small particles and discrete movement by simply repeatedly moving one particle onto another in the shortest way.
The corresponding procedure \textsc{CollectAB} is described in Alg.~\ref{alg:collectab}.
By iteratively collecting any two disjoint particles, the size of the distinct positions of the particle swarm can be reduced until all particles are at the same position.
The two particles can be chosen with different methods and our focus will be to implement the following methods:
 \begin{enumerate}
  \item Closest pair of particles - choose a pair of particles with the minimum distance between them.
\item Furthest pair of particles - choose a pair with maximum distance between them.
\item Connect to first - choose the first two  particles while searching for particles in the workspace from top left to bottom right.
\item Random combinations - choose any two particles. 
\item First to last - choose the first particle and last particle, i.e., the leftmost top and rightmost bottom particles, respectively. 
\end{enumerate}
\textsc{CollectAB} can be called to implement any of these methods. 

\begin{algorithm}
	\caption{Collecting two particles that can overlap}\label{alg:collectab}
	\begin{algorithmic}[1]
		\Require $a$ can reach $b$, Polyomino is bounded
		\Procedure{CollectAB}{a: Particle, b: Particle}
		\While{$dist(a,b)\not = 0$}
		\State Let $\mathcal{C}\in \{u,d,l,r\}^{N}$ be the shortest control sequence that moves $a$ onto $pos(b)$ 
		\State Execute $\mathcal{C}$
		\EndWhile
		\EndProcedure
	\end{algorithmic}
\end{algorithm}

\begin{theorem}
	\label{theorem:collectab:controlcommands}
	\textsc{CollectAB} collects two particles in a polyomino with $O(n^3)$ discrete control commands, where $n$ equals the polyomino's height times its width.
\end{theorem}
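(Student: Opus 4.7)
The plan is to bound separately (i) the number of control commands used in a single iteration of the \textsc{while} loop and (ii) the total number of iterations, then multiply. For (i), the shortest control sequence inside the loop traces a simple path in a polyomino of at most $n$ cells, so its length is at most $n$.

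For (ii), I would track positions across iterations. Let $a_k, b_k$ denote the positions of the two particles after the $k$-th iteration and set $\Delta_k := b_k - a_k$. Two observations drive the argument. First, since small particles may overlap and the shortest path computed for $a$ avoids obstacles, $a$ is never blocked during an iteration, so the iteration ends with $a_k = b_{k-1}$. Second, $b$ executes the same global commands but may be blocked by walls or obstacles, so $dist(a,b)$ is non-increasing across iterations and remains unchanged exactly when $b$ is never blocked. In the blockage-free case a direct calculation gives $b_k = b_{k-1} + \Delta_{k-1}$, hence $\Delta_k = \Delta_{k-1}$; that is, $b$ translates rigidly by a constant non-zero vector $\Delta$.

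Because the polyomino is contained in an $h \times w$ bounding box with $hw = n$, the sequence $b_{k-1},\, b_{k-1}+\Delta,\, b_{k-1}+2\Delta, \ldots$ must leave the box in at most $O(n)$ steps, at which point some command is forced to block and $dist(a,b)$ strictly decreases. Since $dist(a,b) \leq n$ initially and drops at least once every $O(n)$ iterations, the loop runs at most $O(n^2)$ times, and multiplying by $O(n)$ commands per iteration yields the desired $O(n^3)$ bound.

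The step I expect to require the most care is the blockage-free case: confirming both that $\Delta$ is preserved from iteration to iteration, and that the entire intermediate trajectory of $b$ (not merely its endpoint) stays inside the polyomino whenever no block occurs, so that the translation argument really applies. Everything else reduces to counting across at most $n$ distinct distance values and $O(n)$ constant-distance iterations per value.
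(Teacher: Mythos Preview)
Your proposal is correct and follows essentially the same route as the paper: bound each iteration by $O(n)$ commands, show the distance is non-increasing and strictly decreases whenever $b$ is blocked, observe that collision-free iterations translate $b$ by the fixed vector $\Delta = pos(b)-pos(a)$ so a block must occur within $O(n)$ iterations, and multiply out to $O(n^3)$. The concern you flag about the intermediate trajectory of $b$ is actually automatic---by definition, ``no block occurs'' means every command moves $b$ one cell within the polyomino, so the entire trajectory stays inside; the only point needing care is that $\Delta$ is preserved, which you already argue correctly via $a_k=b_{k-1}$ and $b_k=b_{k-1}+\Delta_{k-1}$.
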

\begin{proof}
	The distance between $a$ and $b$ equals the length of $\mathcal{C}$.
	After execution of $\mathcal{C}$, the distance has not increased as $a$ is now on the previous position of $b$ and $b$ has at most moved $|\mathcal{C}|$ units from it.
	If $b$ had a collision during the execution of $\mathcal{C}$, the distance is even less as at least one command did not result in a move of $b$.
	As $dist(a,b)\in O(n)$, only $O(n)$ loop iterations with collisions are needed to collect $a$ and $b$.
	Obviously, $|\mathcal{C}|\in O(n)$ and hence every loop iteration executes at most $O(n)$ commands.
	With every iteration without collision, the positions of $a$ and $b$ change each by $pos(b)-pos(a)$.
	This difference only changes if $b$ had a collision, therefore the particles move in the same direction with every collision-free iteration.
	After $O(n)$ collision-free iterations of the loop, $b$ must have a collision, as the polyomino is finite.
	This results in $O(n^2)$ commands to reduce the distance by at least one and thus $O(n^3)$ commands suffice to collect $a$ and $b$.
\end{proof}

\begin{theorem}
	\textsc{CollectAB} has a computational complexity of $O(n^3)$.
\end{theorem}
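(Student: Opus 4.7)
The plan is to piggyback on Theorem~\ref{theorem:collectab:controlcommands}: we already know that \textsc{CollectAB} issues at most $O(n^3)$ discrete control commands, so the remaining task is to show that the per-command bookkeeping and the per-iteration planning together add no more than a constant-factor overhead.

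First I would bound the number of loop iterations. From the analysis in the previous proof, $O(n)$ collision-free iterations suffice to force a collision of $b$, and $O(n)$ collisions suffice to collect the particles; hence the \textbf{while}-loop is entered $O(n^2)$ times in the worst case. Second, I would bound the cost of one iteration. The dominant step is computing a shortest control sequence $\mathcal{C}$ that moves $a$ onto $pos(b)$ inside the polyomino of $n$ cells; this is a single-source shortest-path computation on a planar grid graph with at most $n$ vertices and $O(n)$ edges and can be performed by breadth-first search in $O(n)$ time. The resulting sequence has length $|\mathcal{C}|\in O(n)$, and executing one discrete command only updates the positions of $a$ and $b$, which takes $O(1)$ per command given a constant-time lookup of obstacles and the other particle. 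Hence a full iteration costs $O(n)$.

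Multiplying the bounds yields total running time $O(n^2)\cdot O(n) = O(n^3)$, matching the stated bound. I would close by noting that this is tight in the sense that it is asymptotically the same as the command count already proven, so nothing is lost by the planning overhead.

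The main obstacle, as I see it, is not the counting but making sure the per-iteration cost is truly $O(n)$ and not larger: I would need to be explicit that the BFS is done on the free-space graph of the polyomino (size $O(n)$) rather than on any product configuration space, and that checking collisions during execution can be answered in $O(1)$ by a precomputed obstacle table for the static polyomino plus an $O(1)$ check against the current position of the other particle. Once these data-structure choices are spelled out, the $O(n^3)$ bound follows immediately from the iteration count established in Theorem~\ref{theorem:collectab:controlcommands}.
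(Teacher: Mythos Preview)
Your proposal is correct and follows essentially the same approach as the paper: bound the number of loop iterations by $O(n^2)$ using the analysis of Theorem~\ref{theorem:collectab:controlcommands}, bound each iteration by $O(n)$ via a breadth-first search plus $O(1)$-per-command execution, and multiply. Your version is actually more careful than the paper's in spelling out why command execution is $O(1)$ and why the BFS graph has size $O(n)$, but the skeleton of the argument is identical.
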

\begin{proof} The shortest control sequence $\mathcal{C}$ can be calculated in $O(n)$ time by a simple breadth-first-search.
Under the assumption that a command can be executed in $O(1)$, one loop iteration has a computational complexity of $O(n)$. With $O(n^2)$ loop iterations (see proof of Theorem~\ref{theorem:collectab:controlcommands}), this results in an overall complexity of $O(n^3)$.
\end{proof}

\begin{theorem}
A particle swarm of size $O(m)$ can be collected with $O(m*n^3)$
discrete control commands and a computational complexity of $O(m*n^3)$ where
$n$ equals the polyomino's height times its width.  \end{theorem}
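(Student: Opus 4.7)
The plan is to invoke \textsc{CollectAB} as a subroutine $O(m)$ times, relying on the two preceding theorems for the per-call cost. Initially the swarm occupies at most $O(m)$ distinct positions in the polyomino. The strategy is: while more than one distinct position remains, pick any two particles $a,b$ located at different positions and call \textsc{CollectAB}(a,b). After each such call, $a$ and $b$ share a cell, so the number of distinct occupied positions drops by at least one. Hence at most $m-1$ calls suffice, yielding $O(m)\cdot O(n^3)=O(m\cdot n^3)$ commands and the same bound on running time, since choosing the next pair is $O(m)$ which is absorbed into the $O(n^3)$ per-iteration cost (assuming, as is reasonable, that $m$ is at most $n$).

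The key correctness step I would carefully verify is that \textsc{CollectAB}'s analysis, which was written for two particles in isolation, still applies when many other small particles are present. For small particles, a cell is merely \emph{filled} or \emph{empty}, so particles neither block nor are blocked by other particles; movement of $a$ and of $b$ is determined solely by obstacles and the polyomino boundary. Thus the shortest control sequence $\mathcal{C}$ used inside \textsc{CollectAB} still moves $a$ onto the previous position of $b$, and the entire distance-reduction argument from Theorem~\ref{theorem:collectab:controlcommands} carries over verbatim.

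I also need the invariant that the distinct-position count is monotonically non-increasing throughout the overall procedure, so that the reductions actually accumulate. Two particles sharing a cell experience identical local obstacle configurations and identical commands, so they remain co-located forever after; no command can split a merged group. Conversely, \textsc{CollectAB}(a,b) can only merge, never split, because each executed command is deterministic on positions. Therefore after each invocation the number of distinct positions is strictly smaller, and after at most $m-1$ invocations the swarm is collected.

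The main obstacle is precisely the above subtlety, namely ruling out the possibility that while we collect one pair, previously-merged pairs become un-merged, or that side-effects on other particles somehow re-increase the number of distinct positions. Once that invariant is established, the resource bounds follow by simply summing $O(m)$ copies of the per-call bounds from the two preceding theorems, giving $O(m \cdot n^3)$ commands and $O(m \cdot n^3)$ computational complexity as claimed.
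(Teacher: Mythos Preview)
Your proposal is correct and follows exactly the same approach as the paper: repeatedly invoke \textsc{CollectAB} on a pair of disjoint particles to reduce the number of distinct positions by one, and iterate $O(m)$ times. The paper's proof is a terse three-sentence version of your argument; your additional verification that small particles do not interfere with the two-particle analysis and that merged particles stay merged fills in details the paper leaves implicit.
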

\begin{proof} Select two disjunct particles and execute $\mathtt{CollectAB}$.
	This reduces the size of distinct positions in the particle swarm by one.
	After $O(m)$ executions, there is only one position left and the particle swarm is collected.
\end{proof}

%
  

  \subsection{Collecting large particles in a target region}
\label{Alg3}

In the previous two subsections, the particles are relatively small,
allowing several to be collected in the same location 
anywhere in the environment. If the particles are relatively large, they may block each other's way,
making the motion control trickier.
We can still deliver a swarm of particles to a target region by making use of discrete moves, 
assuming that particles are metabolized once they reach the target region, i.e., the target is ``sticky''.
(This implies that they stay within the target region once they get there, and 
that they do not block each other within that region.)

\begin{theorem}
\label{th:large}
For a sticky target region and large particles within an
environment of diameter $D$, a particle swarm of
size $O(m)$ can be collected with $O(m \times D)$ discrete control commands.
\end{theorem}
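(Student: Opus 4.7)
The plan is to absorb the particles into the sticky target one at a time, showing that each absorption costs only $O(D)$ discrete control commands; summing over the $m$ particles then yields the claimed $O(m \cdot D)$ total.

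For a single absorption phase, I would select the particle $p^{\star}$ whose current free-space distance to the target region is minimum; call this distance $d^{\star} \le D$, and let $\pi = (v_0, v_1, \ldots, v_{d^{\star}})$ be a shortest path from $p^{\star}$'s current cell $v_0$ to a nearest target cell $v_{d^{\star}}$. The crucial structural observation is that, at the moment $p^{\star}$ is chosen, no other particle occupies any interior vertex $v_1, \ldots, v_{d^{\star}-1}$: if some other particle were on $\pi$, it would be strictly closer to the target than $p^{\star}$, contradicting the minimality of $d^{\star}$. So $\pi$ is a clean corridor along which $p^{\star}$ can in principle walk to the target.

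I would then issue the $d^{\star}$ discrete commands whose directions trace $\pi$. In the benign case $p^{\star}$ is not interfered with and is absorbed at $v_{d^{\star}}$ within at most $D$ commands. The main obstacle is that every command moves every mobile particle in unison, so an off-path particle $q$ might slide onto some $v_i$ and block $p^{\star}$. I would handle this with a cascade argument: whenever $p^{\star}$ is blocked on $\pi$, the blocking particle (and its own blockers) form a rigid column in the direction of the current command; the head of this column either enters the sticky target and is absorbed (ending the phase successfully), or is pinned against an obstacle or boundary. In the latter case I re-select the closest particle and restart the phase; because at least one particle has moved strictly closer to the target during the aborted attempt, the new $d^{\star}$ is strictly smaller, so at most $D$ restarts can occur per phase.

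The delicate point is bounding the total command count by $O(D)$ per absorbed particle rather than $O(D^2)$. To avoid the naive blow-up from repeated restarts, I would use an amortized potential $\Phi = \sum_{p \notin \text{target}} \mathrm{dist}(p, \text{target}) \le m \cdot D$: every issued command either absorbs some particle (permanently erasing its contribution to $\Phi$) or is part of a cascade that strictly decreases $\Phi$ by at least one. Telescoping then yields an $O(m \cdot D)$ global bound on the total number of discrete commands, completing the argument.
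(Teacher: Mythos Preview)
Your overall strategy---absorb particles one by one, charging at most $D$ commands per absorption---matches the paper's one-line induction. But your execution has a genuine gap.

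The potential argument at the end is wrong. The quantity $\Phi = \sum_{p \notin \text{target}} \mathrm{dist}(p,\text{target})$ is \emph{not} non-increasing under a command. A single discrete move in one direction can bring one particle a step closer to the target while pushing many far-away particles one step farther. Concretely: put the target at the origin, one particle at $(1,0)$, and $k$ particles stacked at $(0,10),(0,11),\dots,(0,9+k)$. The command ``left'' absorbs the close particle (drop of $1$ in $\Phi$) but shifts each of the $k$ far particles to $x=-1$, increasing every one of their distances by $1$. For $k\ge 2$ the potential strictly increases. So your telescoping does not go through, and you are back to the $O(D^2)$ per phase bound you were trying to avoid.

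The good news is that the whole blocking/cascade analysis is unnecessary: your ``benign case'' always occurs. Strengthen your initial observation into an invariant. If $p^\star$ is a closest particle at distance $d^\star$ with shortest path $\pi=(v_0,\dots,v_{d^\star})$, then after issuing the first $k$ commands of $\pi$, every other (non-absorbed) particle $q$ still satisfies $\mathrm{dist}(q,\text{target})\ge d^\star-k$, because each command lowers any distance by at most one. But $v_{k+1}$, being on a shortest path, is at distance exactly $d^\star-k-1$. Hence no $q$ can occupy $v_{k+1}$ when $p^\star$ tries to enter it, so $p^\star$ is never blocked and reaches the target in at most $D$ commands, period. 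The induction then gives $O(mD)$ directly.

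The paper's own proof simply asserts ``moving one particle to the target region takes at most $D$ moves, which leaves all other particles within distance $D$'' and invokes induction; it does not spell out the non-blocking argument above, but that invariant is what makes the assertion true for large particles.
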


\begin{proof} The proof is straightforward by induction. Moving one particle to the target region takes at most $D$ moves, 
which leaves all other particles within distance $D$.
\end{proof}

Note that the extent of the environment is critical. If we are dealing with $m$ particles within an environment of size $n\times n$,
then we get the following.

\begin{corollary}
\label{co:large+nxn}
For a sticky target region and large particles, a particle swarm within
an environment of size $n\times n$ can be collected with $O(n^3)$ discrete control commands.
\end{corollary}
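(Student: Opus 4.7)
The plan is to derive this corollary as a direct specialization of Theorem~\ref{th:large}, simply bounding the two parameters $m$ and $D$ in terms of $n$.

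First I would observe that in an $n \times n$ grid environment, the diameter $D$ (in terms of discrete moves) is at most $O(n)$: any two cells can be connected by a path whose length is bounded by the Manhattan distance between them, which is at most $2(n-1) = O(n)$. Second, I would note that because large particles each occupy one cell and no two large particles can share a cell, the number of particles $m$ is bounded by the number of free cells, hence $m \leq n^2 = O(n^2)$.

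Substituting these two bounds into the $O(m \times D)$ guarantee of Theorem~\ref{th:large} immediately yields $O(n^2 \cdot n) = O(n^3)$ discrete control commands, which is the desired bound. The inductive argument underlying Theorem~\ref{th:large} applies unchanged: at each stage one particle is routed to the sticky target in at most $D = O(n)$ moves, and since it is absorbed upon arrival it does not obstruct subsequent particles; the remaining particles may be displaced arbitrarily by the global moves, but they still lie within the $n \times n$ environment, so the next particle can again be delivered in $O(n)$ moves.

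There is essentially no obstacle here beyond correctly identifying the two parameter bounds; the only subtlety worth flagging in the write-up is that $m$ really is $O(n^2)$ rather than being an independent parameter, so the corollary is a genuine strengthening (in terms of a single scale parameter) rather than merely a rephrasing. No new algorithmic idea or additional case analysis is required.
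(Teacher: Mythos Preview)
Your derivation—bound $m\le n^2$ and $D=O(n)$, then plug into Theorem~\ref{th:large}—is exactly the (unstated) argument the paper has in mind; the corollary is simply placed after the theorem with a one-line remark about $n\times n$ environments and no separate proof is given. The one point to tighten is your justification that $D=O(n)$: the claim that any two cells are joined by a path no longer than their Manhattan distance holds for an obstacle-free $n\times n$ square but fails for a general polyomino inscribed in an $n\times n$ bounding box, where the graph diameter can be $\Theta(n^2)$; since this corollary (like the companion Corollary~\ref{co:large+m}, which speaks explicitly of a ``square environment'') is evidently intended in the former sense, your argument goes through once that reading is made explicit.
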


%
%

If all particles of the swarm are relatively close to the target region, the complexity can be stated differently.

\begin{corollary}
\label{co:large+m}
For a sticky target region and large particles, a particle swarm of size $O(m)$ 
that fills a square environment around a target region can be collected with $O(m^{3/2})$ discrete control commands.
\end{corollary}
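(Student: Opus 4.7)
The plan is to observe that this corollary is a direct specialization of Theorem~\ref{th:large}, in which the diameter $D$ of the relevant environment is forced to be $O(\sqrt{m})$ by the packing hypothesis. Once the diameter bound is established, the corollary follows by a one-line substitution into the $O(m \times D)$ command count.

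First I would bound the diameter. Since each large particle occupies exactly one grid cell, the hypothesis that $m$ particles fill a square environment around the target forces that square to have side length $\Theta(\sqrt{m})$. Hence the diameter of the workspace, measured in the grid metric governing the $D$ of Theorem~\ref{th:large}, is $D = \Theta(\sqrt{m})$. Plugging this into the $O(m\times D)$ bound yields $O(m\cdot\sqrt{m})=O(m^{3/2})$ discrete control commands, matching the claim.

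The main subtlety, and the only place where a careful reader might pause, is confirming that the diameter invariant used in the proof of Theorem~\ref{th:large} is preserved throughout the process. Each single-particle delivery uses at most $D$ moves, so every remaining particle is displaced by at most $D$ grid cells and therefore remains within a region of diameter $O(\sqrt{m})$. Because the target is sticky, the active particle count strictly decreases after each delivery, so the $O(\sqrt{m})$ bound on $D$ applies (and in fact can only improve) at every stage. Summing the per-delivery cost over the $m$ deliveries then telescopes to the desired $O(m^{3/2})$ bound, with no additional work needed beyond the elementary geometric observation on side length.
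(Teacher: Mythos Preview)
Your proposal is correct and matches the paper's intended argument: the paper gives no explicit proof for this corollary, but positions it as an immediate consequence of Theorem~\ref{th:large} with $D=O(\sqrt{m})$, which is exactly your substitution. Your third paragraph is more cautious than necessary---since the ``square environment'' is the workspace itself, particles can never leave it, so the diameter bound $D=O(\sqrt{m})$ holds automatically at every stage without tracking displacements.
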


  \subsection{Collecting large particles with maximal moves}
\label{Alg4}

Our results rely on being able to limit the extent of the motion, i.e., having {\em discrete moves}.
If that is not the case, i.e., in the case of {\em maximal moves}, in which each particle moves
until it is stopped by an obstacle or another stopped particle, the problem becomes considerably harder
and may indeed be intractable.  
As we showed in previous work~\cite{Becker2013f}, deciding whether even a 
{\em single} particle can be delivered to a target region  (Fig.~\ref{fig:NPc}) is already an NP-hard problem; this implies
not only that finding a solution is computationally hard, but that there are instances in which
no solution exists.

\begin{figure}
\centering
\begin{overpic}[width=1.0\columnwidth]{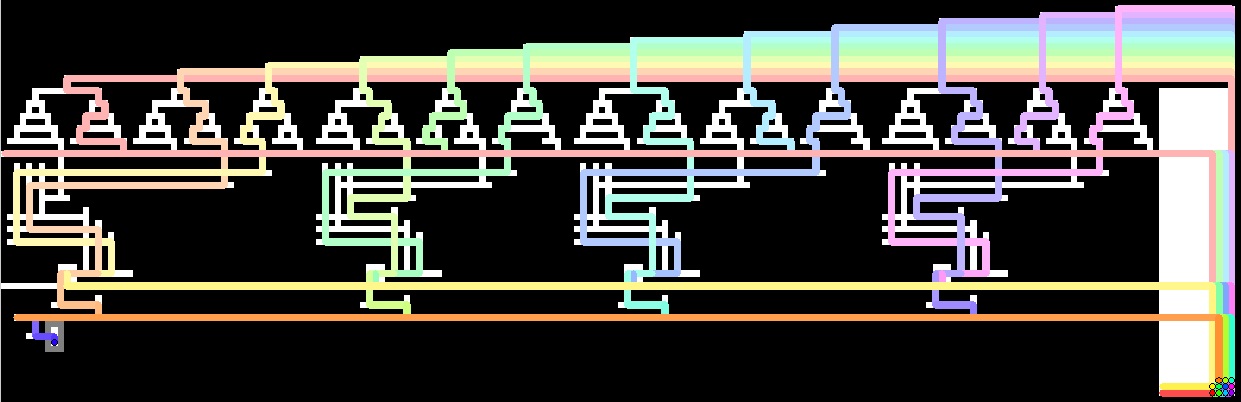}\end{overpic}
\caption{\label{fig:NPc}
NP-hardness of deciding reachability with maximal moves: In order to deliver a {\em single} ``blue'' particle into the grey target region in the lower left,
a 3SAT instance has to be satisfied, corresponding to a set of decisions in the upper part of the construction.}
\end{figure}

%


    \section{Results and Inferences}\label{sec:Results}
  Experiment one compares the optimal algorithm (Alg.~\ref{alg:OptimalCollectWithOverlap}) versus three varients of the greedy algorithm (Alg.~\ref{alg:collectab}). 
  Fig.~\ref{fig:OptimalVsGreedy.pdf} compares the number of moves required to converge for the optimal strategy (Alg.~\ref{alg:OptimalCollectWithOverlap}) versus the greedy strategy for small worlds ranging from 5 free spaces to 30 free spaces.
  Workspaces with more spaces were not considered because the optimal algorithm requires a lot of time due to the exponential time to free space relationship.
   This plot shows that the optimal algorithm requires approximately half the moves of the greedy algorithms. The plot has an upward moving trend in general as the number of free spaces increases, but there are local minimums corresponding to easier configurations, which leads to downward spikes in the plot. The number of moves taken to completely collect particles also depends on the complexity of the workspace and does not completely depend on the number of free spaces. 

 For small workspaces  the best result among the greedy algorithms changes and so we cannot determine which is the best using small workspaces.
  To further compare the greedy strategies, we tested the algorithms on larger workspaces. The largest workspace in Fig.~\ref{fig:CollectingLeaf}  has 8,493 non-obstacle positions. Fig.~\ref{fig:CollectingLeaf} demonstrates that choosing which particles to pairwise collect in Alg.~\ref{alg:collectab} has a large impact on convergence time.
  We conducted a comparison study between the number of moves and the resulting unique particles. As discussed earlier in Alg.~1 (Section~\ref{Alg1}) and Alg.~2 (Section~\ref{Alg2}), getting the number of unique particles down to `1' signifies completion of the collecting algorithm. In the leaf vascular network, the majority of particle collection occurs during the first steps, with a long tail distribution to collect the final particles, as shown in the top row of images. Fig.~\ref{fig:CollectingLeaf}  shows that \emph{connect to first},  discussed in Section \ref{sec:theory}.B, outperforms the other algorithms. This can also be validated by further testing to compare the three greedy algorithms on larger workspaces.
 
  We simulated bounded worlds of varied sizes from 500 free spaces to 8,493 free spaces.  The results are represented in Fig.~\ref{fig:ConvergenceGreedyUsingLeafsOfVaryingSizes}, based on Fig.~\ref{fig:vascularNetwork} because biological vasculatures are our goal application. This graph plot has a smoother trend compared to the plot in Fig.~\ref{fig:OptimalVsGreedy.pdf}  because the ratio of free space to node complexity is similar for worlds in Fig.~\ref{fig:ConvergenceGreedyUsingLeafsOfVaryingSizes}. The important observation is the consistency that \emph{connect to first} performs best. This validates \emph{connect to first} as the best of the compared algorithms. This is good news because unlike the other two techniques, which involve distance calculation between all pairs of particles, there is negligible calculation involved in the \emph{connect to first}  algorithm. The data for which two particle are in top-leftmost location is readily available from the row, column indices of the particles.

   \begin{figure}
  \begin{overpic}[width=\columnwidth]{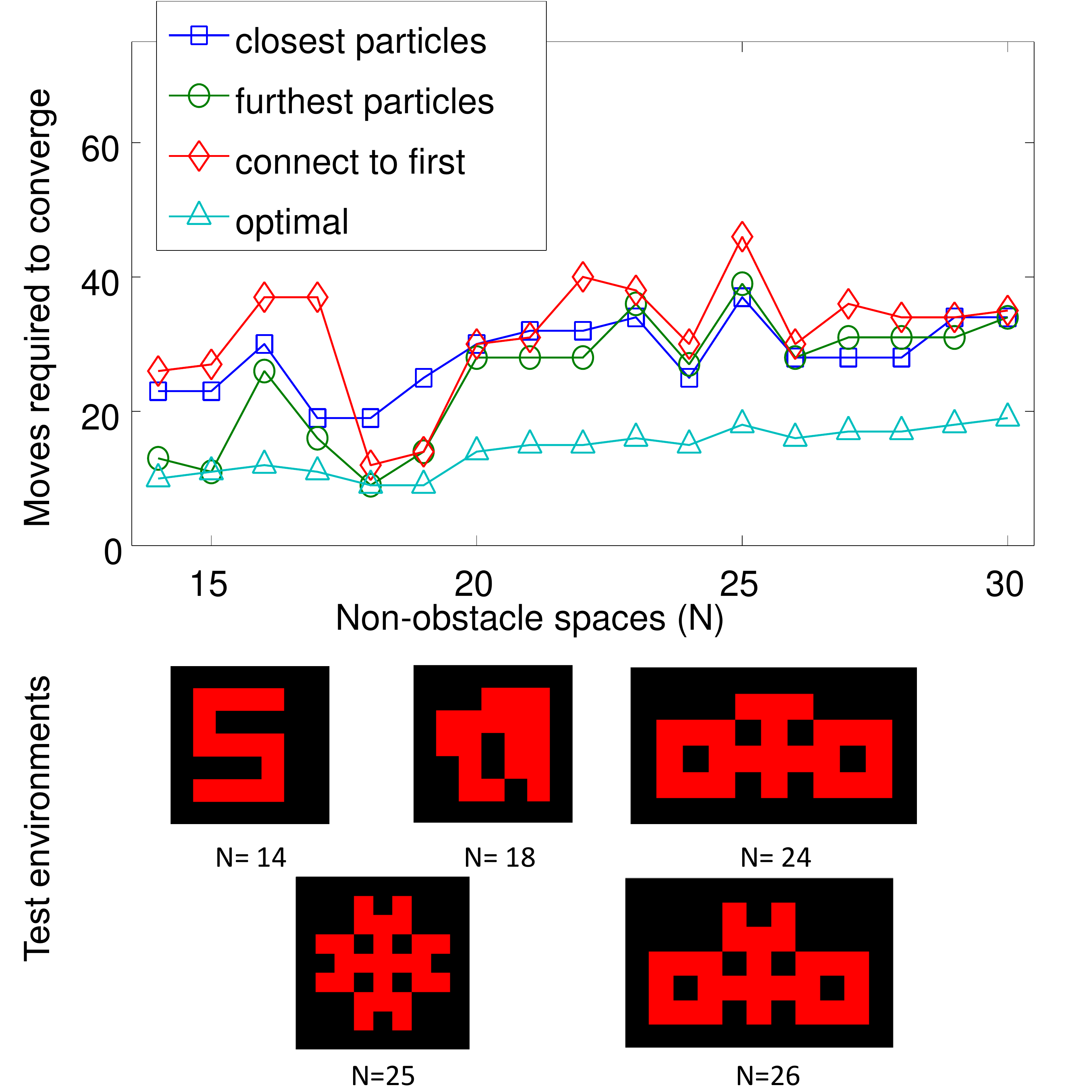}\end{overpic}
  \caption{\label{fig:OptimalVsGreedy.pdf} The greedy strategy requires 1.95 as many moves as the optimal strategy in a test with 17 different test environments. Below the plot are examples of some of the test environments used.  
   }
  \end{figure}

 \begin{figure}
  \begin{overpic}[width=\columnwidth]{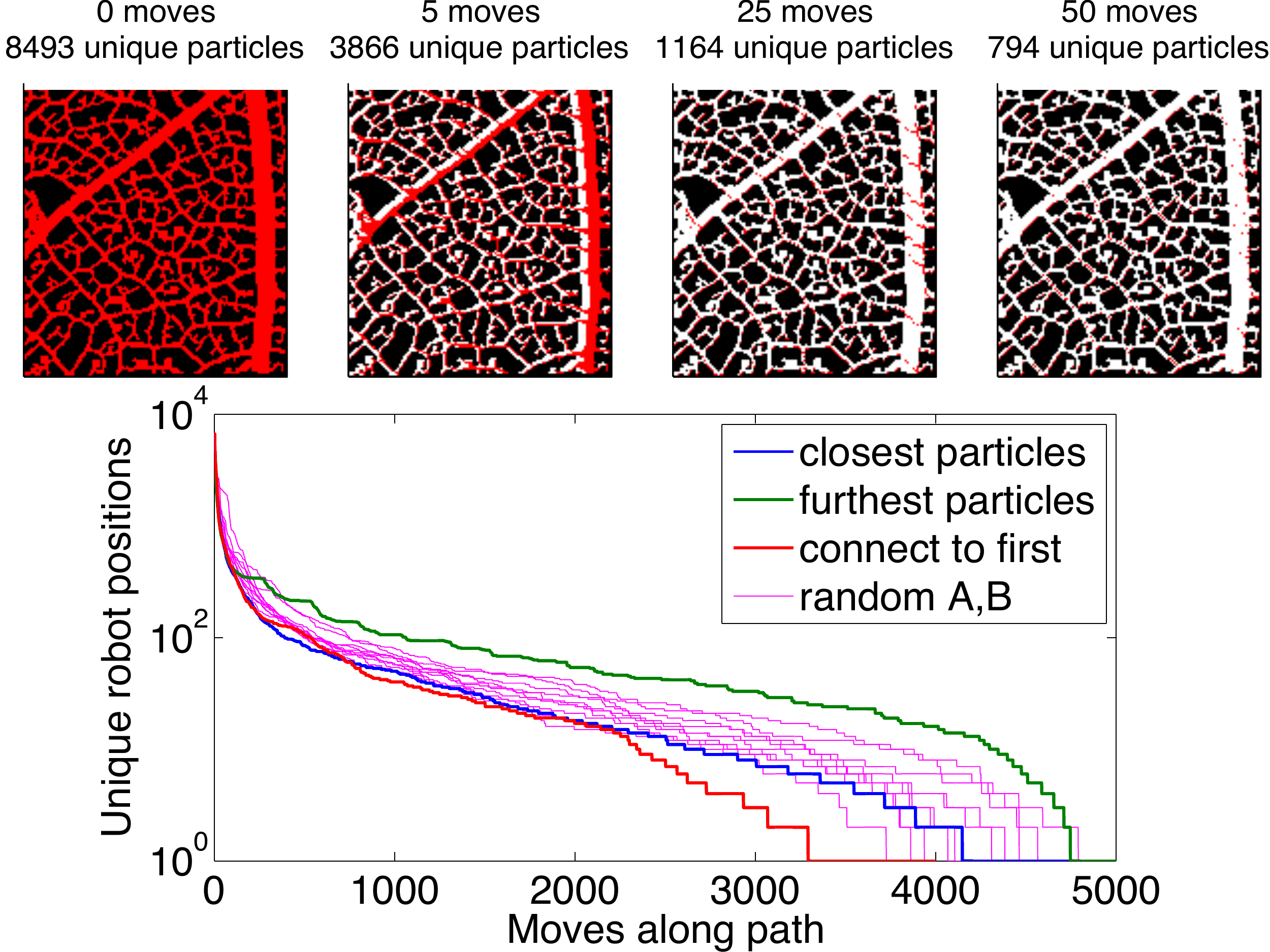}\end{overpic}
  \caption{\label{fig:CollectingLeaf}  Choosing which two particles to collect during each  {\sc CollectAB} step changes convergence time.    Collecting the furthest particles (green) performs poorly.  Connecting the two closest nodes (blue) is better, but both strategies are beat by the strategy that chooses the first two (the top-most, left-most) particles  each iteration.
 }
  \end{figure}

   \begin{figure}
  \begin{overpic}[width=\columnwidth]{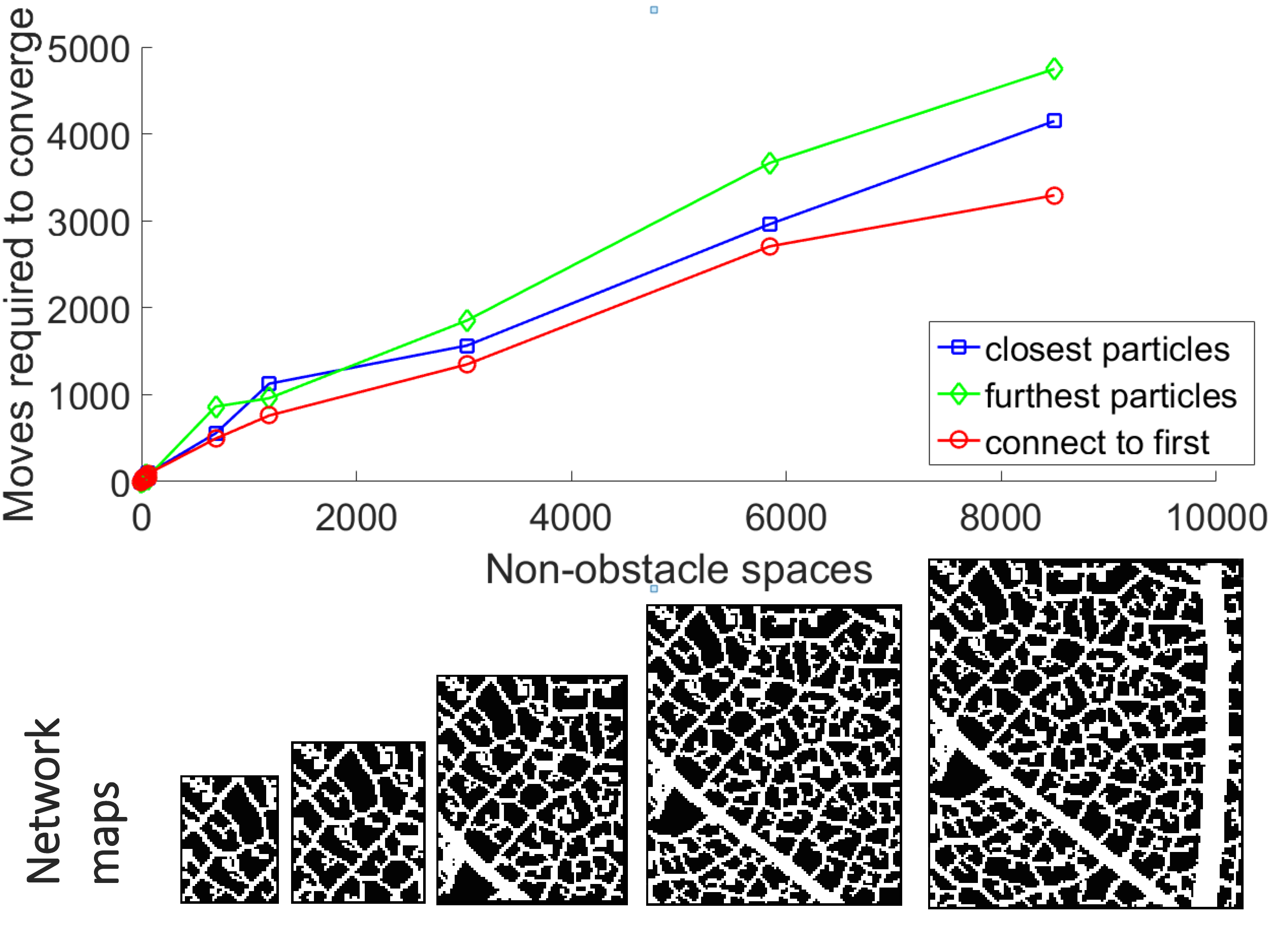}\end{overpic}
  \caption{\label{fig:ConvergenceGreedyUsingLeafsOfVaryingSizes} Comparing the required number of moves to collect overlapping particles to a position for three greedy algorithms (Alg.~\ref{alg:collectab}) for five different,  connected vascular networks.   }
  \end{figure}
  
     \section{Conclusion and Future work}\label{sec:Conclution} 
    This paper presented optimal and greedy algorithms to collect small globally commanded particles with guarantees that these algorithms will always collect particles for any bounded world which can be represented as a connected polyomino.
Algorithm \emph{connect to first} combines both low computational time and, in simulations, requires fewer moves than five other algorithms.
It  requires 50,607 moves to converge all the particles in the complete leaf world shown in Fig.~\ref{fig:vascularNetwork} (see video \cite{youtube}). 
   We also introduced challenges inherent with large particle collection, which poses new problems and complexities. 
The technology to fabricate microbots is rapidly improving and so has  interest in microrobots for potential applications in drug delivery. 
   There are many opportunities for future work, including refining the algorithms to handle large particles. 
   This paper assumed the workspace was bounded. That assumption is violated in biological vascular systems, which connect to larger vasculature.
    One avenue for future research is to add constraints to serve as virtual walls and actively prevent particles from escaping through a set of exits.  
   Additional complexities such as medium viscosity and wall friction must be studied before the algorithms are applied in vivo/in vitro. 
   Future work should focus not only on collecting, but also on avoiding accumulation in sensitive regions.

\section{Acknowledgements}
This material is based upon work supported by the National Science Foundation under Grant No.\ 
\href{http://nsf.gov/awardsearch/showAward?AWD_ID=1553063}{ IIS-1553063}.

   
\bibliographystyle{IEEEtran}
\balance
\bibliography{IEEEabrv,./bib/aaronrefs}%

\begin{thebibliography}{10}
\providecommand{\url}[1]{#1}
\csname url@samestyle\endcsname
\providecommand{\newblock}{\relax}
\providecommand{\bibinfo}[2]{#2}
\providecommand{\BIBentrySTDinterwordspacing}{\spaceskip=0pt\relax}
\providecommand{\BIBentryALTinterwordstretchfactor}{4}
\providecommand{\BIBentryALTinterwordspacing}{\spaceskip=\fontdimen2\font plus
\BIBentryALTinterwordstretchfactor\fontdimen3\font minus
  \fontdimen4\font\relax}
\providecommand{\BIBforeignlanguage}[2]{{%
\expandafter\ifx\csname l@#1\endcsname\relax
\typeout{** WARNING: IEEEtran.bst: No hyphenation pattern has been}%
\typeout{** loaded for the language `#1'. Using the pattern for}%
\typeout{** the default language instead.}%
\else
\language=\csname l@#1\endcsname
\fi
#2}}
\providecommand{\BIBdecl}{\relax}
\BIBdecl

\bibitem{Becker2014}
A.~Becker, E.~Demaine, S.~Fekete, and J.~McLurkin, ``Particle computation:
  Designing worlds to control robot swarms with only global signals,'' in
  \emph{IEEE International Conference on Robotics and Automation (ICRA)}.\hskip
  1em plus 0.5em minus 0.4em\relax Hong Kong: IEEE, May 2014, pp. 6751--6756.

\bibitem{Becker2014a}
\BIBentryALTinterwordspacing
A.~Becker, E.~D. Demaine, S.~P. Fekete, G.~Habibi, and J.~McLurkin,
  ``\BIBforeignlanguage{English}{Reconfiguring massive particle swarms with
  limited, global control},'' in \emph{\BIBforeignlanguage{English}{Algorithms
  for Sensor Systems}}, ser. Lecture Notes in Computer Science, P.~Flocchini,
  J.~Gao, E.~Kranakis, and F.~Meyer auf~der Heide, Eds.\hskip 1em plus 0.5em
  minus 0.4em\relax Springer Berlin Heidelberg, 2014, vol. 8243, pp. 51--66.
  [Online]. Available: \url{http://dx.doi.org/10.1007/978-3-642-45346-5_5}
\BIBentrySTDinterwordspacing

\bibitem{bmd+-pcdfbm-15}
A.~Becker, E.~D. Demaine, S.~P. Fekete, S.~H.~M. Shad, and R.~Morris{-}Wright,
  ``Tilt: The video. designing worlds to control robot swarms with only global
  signals,'' in \emph{31st International Symposium on Computational Geometry
  (SoCG'15)}, 2015, pp. 16--18.

\bibitem{youtube}
\BIBentryALTinterwordspacing
A.~V. Mahadev, D.~Krupke, J.-M. Reinhardt, S.~P. Fekete, and A.~T. Becker,
  ``Collecting a swarm in a grid environment using shared, global inputs,''
  Youtube, Mar 2016. [Online]. Available:
  \url{https://www.youtube.com/watch?v=rY7Br4l4SHY}
\BIBentrySTDinterwordspacing

\bibitem{Chowdhury2015}
\BIBentryALTinterwordspacing
S.~Chowdhury, W.~Jing, and D.~J. Cappelleri, ``Controlling multiple
  microrobots: recent progress and future challenges,'' \emph{Journal of
  Micro-Bio Robotics}, vol.~10, no. 1-4, pp. 1--11, 2015. [Online]. Available:
  \url{http://dx.doi.org/10.1007/s12213-015-0083-6}
\BIBentrySTDinterwordspacing

\bibitem{cheang2014multiple}
U.~K. Cheang, K.~Lee, A.~A. Julius, and M.~J. Kim, ``Multiple-robot drug
  delivery strategy through coordinated teams of microswimmers,'' \emph{Applied
  physics letters}, vol. 105, no.~8, p. 083705, 2014.

\bibitem{pouponneau2009magnetic}
P.~Pouponneau, J.-C. Leroux, and S.~Martel, ``Magnetic nanoparticles
  encapsulated into biodegradable microparticles steered with an upgraded
  magnetic resonance imaging system for tumor chemoembolization,''
  \emph{Biomaterials}, vol.~30, no.~31, pp. 6327--6332, 2009.

\bibitem{litvinov2012high}
J.~Litvinov, A.~Nasrullah, T.~Sherlock, Y.-J. Wang, P.~Ruchhoeft, and R.~C.
  Willson, ``High-throughput top-down fabrication of uniform magnetic
  particles,'' \emph{PloS one}, vol.~7, no.~5, p. e37440, 2012.

\bibitem{mellal2015magnetic}
L.~Mellal, D.~Folio, K.~Belharet, and A.~Ferreira, ``Magnetic microbot design
  framework for antiangiogenic tumor therapy,'' in \emph{Intelligent Robots and
  Systems (IROS), 2015 IEEE/RSJ International Conference on}.\hskip 1em plus
  0.5em minus 0.4em\relax IEEE, 2015, pp. 1397--1402.

\bibitem{mathieu2007magnetic}
J.-B. Mathieu and S.~Martel, ``Magnetic microparticle steering within the
  constraints of an mri system: proof of concept of a novel targeting
  approach,'' \emph{Biomedical microdevices}, vol.~9, no.~6, pp. 801--808,
  2007.

\bibitem{o2005almostPhD}
J.~O'Kane, ``Almost-sensorless localization,'' Ph.D. dissertation, University
  of Illinois at Urbana-Champaign, 2005.

\bibitem{o2005almost}
J.~O'Kane and S.~M. LaValle, ``Almost-sensorless localization,'' in
  \emph{Robotics and Automation, 2005. ICRA 2005. Proceedings of the 2005 IEEE
  International Conference on}.\hskip 1em plus 0.5em minus 0.4em\relax IEEE,
  2005, pp. 3764--3769.

\bibitem{Lewis01092013}
\BIBentryALTinterwordspacing
J.~S. Lewis and J.~M. O'Kane, ``Planning for provably reliable navigation using
  an unreliable, nearly sensorless robot,'' \emph{The International Journal of
  Robotics Research}, vol.~32, no.~11, pp. 1342--1357, 2013. [Online].
  Available: \url{http://ijr.sagepub.com/content/32/11/1342.abstract}
\BIBentrySTDinterwordspacing

\bibitem{Akella2000a}
S.~Akella and M.~T. Mason, ``Orienting toleranced polygonal parts,'' \emph{The
  International Journal of Robotics Research}, vol.~19, no.~12, pp. 1147--1170,
  Dec. 2000.

\bibitem{dudek1998localizing}
G.~Dudek, K.~Romanik, and S.~Whitesides, ``Localizing a robot with minimum
  travel,'' \emph{SIAM Journal on Computing}, vol.~27, no.~2, pp. 583--604,
  1998.

\bibitem{alpernbook}
\BIBentryALTinterwordspacing
S.~Alpern and S.~Gal, \emph{The theory of search games and rendezvous}, ser.
  International Series in Operations Research and Managment Science.\hskip 1em
  plus 0.5em minus 0.4em\relax Boston, Dordrecht, London: Kluwer Academic
  Publishers, 2003. [Online]. Available:
  \url{http://opac.inria.fr/record=b1117985}
\BIBentrySTDinterwordspacing

\bibitem{anderson2001two}
E.~J. Anderson and S.~P. Fekete, ``Two dimensional rendezvous search,''
  \emph{Operations Research}, vol.~49, no.~1, pp. 107--118, 2001.

\bibitem{meghjani2012multi}
M.~Meghjani and G.~Dudek, ``Multi-robot exploration and rendezvous on graphs,''
  in \emph{Intelligent Robots and Systems (IROS), 2012 IEEE/RSJ International
  Conference on}.\hskip 1em plus 0.5em minus 0.4em\relax IEEE, 2012, pp.
  5270--5276.

\bibitem{aloupis2014draining}
G.~Aloupis, J.~Cardinal, S.~Collette, F.~Hurtado, S.~Langerman, and
  J.~OʼRourke, ``Draining a polygon---or---rolling a ball out of a polygon,''
  \emph{Computational geometry}, vol.~47, no.~2, pp. 316--328, 2014.

\bibitem{Becker2013b}
A.~Becker, G.~Habibi, J.~Werfel, M.~Rubenstein, and J.~McLurkin, ``Massive
  uniform manipulation: Controlling large populations of simple robots with a
  common input signal,'' in \emph{IEEE/RSJ International Conference on
  Intelligent Robots and Systems (IROS)}, Tokyo, Japan, Nov. 2013, pp.
  520--527.

\bibitem{Becker2013f}
A.~Becker, E.~Demaine, S.~Fekete, G.~Habibi, and J.~McLurkin, ``Reconfiguring
  massive particle swarms with limited, global control,'' in
  \emph{International Symposium on Algorithms and Experiments for Sensor
  Systems, Wireless Networks and Distributed Robotics (ALGOSENSORS)}, Sophia
  Antipolis, France, Sep. 2013, pp. 51--66.

\end{thebibliography}
\end{document}